%% file: sample_paper.tex
\documentclass[twoside]{article}

\usepackage[accepted]{aistats2026}

\makeatletter
\renewcommand{\AISTATS@appearing}{%
Preprint.%
}
\renewcommand{\Notice@String}{\AISTATS@appearing}
\makeatother
\usepackage[round]{natbib}

\usepackage{graphicx}
\usepackage{bm}
\usepackage{xcolor}
\usepackage{color}
\usepackage{amsthm}
\usepackage{amssymb}

\definecolor{cbblue}{RGB}{1,115,178}

\usepackage[colorlinks=true,
            linkcolor=cbblue,
            citecolor=cbblue,
            urlcolor=cbblue]{hyperref}

\newtheorem{theorem}{Theorem}[section]

\input{macros.tex}

\begin{document}

%

%

\twocolumn[

\aistatstitle{A Predictive View on Streaming Hidden Markov Models}

\aistatsauthor{ Gerardo Duran-Martin}

\aistatsaddress{ Oxford-Man Institute, University of Oxford } 
]


\begin{abstract}
We develop a predictive-first optimisation framework for streaming hidden Markov models. Unlike classical approaches that prioritise full posterior recovery under a fully specified generative model, we assume access to regime-specific predictive models whose parameters are learned online while maintaining a fixed transition prior over regimes. Our objective is to sequentially identify latent regimes while maintaining accurate step-ahead predictive distributions. Because the number of possible regime paths grows exponentially, exact filtering is infeasible. We therefore formulate streaming inference as a constrained projection problem in predictive-distribution space: under a fixed hypothesis budget, we approximate the full posterior predictive by the forward-KL optimal mixture supported on $S$ paths. The solution is the renormalised top-$S$ posterior-weighted mixture, providing a principled derivation of beam search for HMMs. The resulting algorithm is fully recursive and deterministic, performing beam-style truncation with closed-form predictive updates and requiring neither EM nor sampling. Empirical comparisons against Online EM and Sequential Monte Carlo under matched computational budgets demonstrate competitive prequential performance.
\end{abstract}

\section{Introduction}

Hidden Markov models (HMMs) provide a classical framework for modelling regime changes in sequential data.
Traditional formulations emphasise latent-state recovery or parameter estimation, typically in offline settings \citep{bishop2006pattern}.
In many modern applications, however, data arrive sequentially, requiring HMMs to be deployed in an online or streaming setting \citep{gama2010knowledge}.

In predictive applications, the quantity of interest is the one-step-ahead posterior predictive distribution.
However, exact maintenance of this predictive
in an HMM
requires tracking all possible latent regime paths, whose number grows exponentially in time, rendering exact filtering computationally infeasible.

To address this tension between predictive accuracy and computational tractability,
we adopt a predictive-first perspective:
rather than treating regime identification as the primary objective, we directly target sequential predictive performance under a fixed hypothesis budget.

We formalise this as a constrained optimisation problem over predictive mixtures under a fixed hypothesis budget.
Specifically, we approximate the full posterior predictive by a mixture supported on at most $S$ latent paths and minimise the forward Kullback--Leibler divergence.
We show that the optimal solution retains the $S$ largest posterior path weights and renormalises them, thereby recovering beam search as the solution of an explicit projection problem.

The resulting algorithm is fully recursive and deterministic.
It combines beam search with closed-form regime-specific updates,
yielding analytic predictive mixtures without EM iterations or sampling.

Code can be found at \url{https://github.com/gerdm/streaming-hmm}.

\section{Related work}

Hidden Markov and Markov-switching models are widely used for regime analysis in sequential data. 
In macroeconomics, regime-switching models were popularised by \citet{hamilton1989new}. 
HMMs remain central in speech recognition and signal processing \citep{rabiner2002tutorialhmm}; 
a comprehensive methodological review is provided by \cite{mor2021hmmreview}.

Sequential inference in HMMs typically proceeds via expectation–maximisation (EM) and its online variants \citep{cappe2011onlinehmm}, or via sequential Monte Carlo (SMC) methods such as Rao–Blackwellised particle filters and particle learning \citep{murphy2001rbpf, carvalho2010pl}. 
EM performs recursive likelihood optimisation, whereas particle methods approximate the path posterior through stochastic sampling. 
Both approaches target posterior estimation under a fully specified generative model.

Beam search, or top-$S$ truncation, is a classical complexity-control strategy in sequential decoding for HMMs and related models \citep{ney1987hmmsearch, graves2012sequence}. 
It is commonly introduced as a heuristic approximation to MAP (Viterbi) decoding \citep{viterbi2003error},
retaining only the highest-scoring partial paths at each step. 
In contrast, we maintain a normalised empirical distribution over retained paths rather than a single MAP trajectory.

Our work relates to predictive-first approaches to Bayesian inference, where sequential predictive performance,
rather than posterior recovery,
defines the objective \citep{fong2023martingale,holmes2023statistical,mclatchie2025predictively}. 
This includes both Bayesian models and predictive methods that do not maintain explicit finite-dimensional parameter posteriors, such as Gaussian-process and martingale posterior predictives. 
In particular,
Gaussian processes have previously been combined with switching structures for change-point detection \citep{garnett2009sequential, saatcci2010gpbocd, altamirano2023robustgpbocd} and in variational GP state-space models \citep{frigola2014variational}. 
In contrast, we use regime-specific GPs directly as predictive components within a fully recursive decoding scheme (Section~\ref{sec:experiment-gphmm}).

\section{Streaming hidden Markov model}
\label{sec:streaming-hmm}
We present a streaming HMM (SHMM) that combines beam-style pruning over latent paths with recursive Bayesian updates of regime parameters along each retained path. While these components are individually standard, they are rarely coupled in a single fully recursive procedure aimed at maintaining predictive mixtures. In the following section, we show that this coupling arises naturally as a predictive KL projection, yielding a deterministic streaming algorithm with explicit hypothesis-budget guarantees.

\subsection{Problem formulation}

Let $y_t \in \mathbb{R}$ be the observation at time $t$ and $\cY_t=(y_1,\dots,y_t)$.
We consider a $K$-state HMM with latent regimes $z_t \in \cK=\{1,\dots,K\}$ and path $\cZ_t=(z_1,\dots,z_t) \in \cK^t$.
Regimes evolve according to a known transition matrix $\pi$,
\begin{equation}\label{eq:transition-prior}
    p(z_t \mid \cZ_{t-1}) = p(z_t \mid z_{t-1}) = \pi_{z_{t-1},z_t}.
\end{equation}
Each regime $k$ is equipped with a (possibly nonparametric) predictive model.
For a given path $\cZ_t$, we assume access to regime-specific predictive summaries $\{\vb_{t,k}\}_{k=1}^K$, where $\vb_{t,k}$ encodes the information in the subsequence
\[
\cY_k(\cZ_t)\triangleq (y_\tau : 1\le \tau \le t,\ z_\tau = k).
\]
The corresponding regime-$k$ one-step predictive density is denoted
\[
f_{\vb_{t,k}}(y)
\triangleq
p(y_{t+1}=y \mid \cY_k(\cZ_t)).
\]

Our primary object is the one-step-ahead posterior predictive,
\begin{equation}
\label{eq:posterior-predictive-hmm}
\begin{aligned}
p(y_{t+1} \mid \cY_t)
&=
\sum_{\cZ_t}
p(\cZ_t \mid \cY_t)
\sum_{z_{t+1}}
p(z_{t+1} \mid \cZ_t)\,
f_{\vb_{t,z_{t+1}}}(y_{t+1}).
\end{aligned}
\end{equation}
Here, the posterior predictive is a mixture over paths,
where each path contributes a transition-weighted mixture of regime-specific predictive densities determined by its summaries $\{\vb_{t,k}\}$.
Recursive evaluation of \eqref{eq:posterior-predictive-hmm}
requires maintaining the path posterior $p(\cZ_t \mid \cY_t)$ and, for each path,
the ability to evaluate the corresponding path-conditional predictives
$f_{\vb_{t,k}}(y)$.

\subsection{Sequential belief update}
\label{sec:belief-update}

Let $\cZ_t = \cZ_{t-1} \cup \{z_t\}$ denote an extension of a path with $z_t \in \cK$.  
Recall that each path $\cZ_t$ carries regime-specific predictive summaries
$\{\vb_{t,j}\}_{j=1}^K$, where $\vb_{t,j}$ encodes the information in
$\cY_j(\cZ_t)$ and determines the regime-$j$ predictive density
$f_{\vb_{t,j}}(\cdot)$.

When $z_t = k$, only the $k$-th summary is updated using the new observation $y_t$, while all other summaries remain unchanged:
\begin{equation}
\vb_{t,j} =
\begin{cases}
\texttt{Update}(\vb_{t-1,k}, y_t) & j = k,\\
\vb_{t-1,j} & j \neq k.
\end{cases}
\end{equation}
Here $\texttt{Update}$ denotes a recursive update operator.
In conjugate parametric models, $\vb_{t,k}$ corresponds to the regime-specific posterior, yielding a closed-form recursion (Appendix~\ref{app:parametric-updates}).
More generally, it may implement robust or generalised Bayesian updates under model misspecification \citep{duran-martin2024wolf,altamirano2024robustgp}, or predictive-only updates that do not maintain explicit parameter posteriors \citep{hahn2018recursive,duran2025martingale}.

\subsection{Sequential path posterior update}
\label{sec:path-posterior-update}

At time $t$, the path posterior takes the form
\[
p(\cZ_t \mid \cY_t)
=
p(\cZ_{t-1} \mid \cY_{t-1})\;
p(z_t \mid \cZ_{t-1})\;
f_{\vb_{t-1, z_{t}}}(y_t),
\]
with known initial condition $p(\cZ_0 \mid \cY_0) = p(z_0)$.
Thus, updating the path posterior at time $t$ requires: 
(i) access to the path posterior at time $t-1$ , 
(ii) the transition probability from $z_{t-1}$ to $z_t$ given by \eqref{eq:transition-prior}, and 
(iii) the regime-specific posterior predictive evaluated at $y_t$.

The number of possible paths grows exponentially in $t$;
for $K$ regimes, there are $K^t$ paths at time $t$,
making exact maintenance of $p(\cZ_t \mid \cY_t)$ intractable.

\subsection{Beam search}
\label{sec:beam-search}
A classical heuristic 
to control complexity
is to keep the
$S \ge 1$ paths with most mass at each time step.
Let $\left\{\cZ_{t-1}^{(s)}, w_{t-1}^{(s)}\right\}_{s=1}^S$
denote the retained paths and normalised weights at time $t-1$, defining the empirical posterior
\[
\nu_{t-1}(\cZ_{t-1})
\triangleq
\sum_{s=1}^S
w_{t-1}^{(s)}
\,\delta(\cZ_{t-1}-\cZ_{t-1}^{(s)}).
\]
At time $t$,
each retained path branches to $K$ candidates.
For $k \in \cK$, the unnormalised weight of the extension
$\cZ_t^{(s,k)} = \cZ_{t-1}^{(s)} \cup \{k\}$ is
\[
\hat{w}_t^{(s,k)}
=
w_{t-1}^{(s)}\;
p(k \mid \cZ_{t-1}^{(s)})\;
f_{\vb_{t-1}, k}(y_t).
\]
This yields $S \times K$ candidate paths.

We retain the $S$ candidates with largest $\hat{w}_t^{(s,k)}$ and renormalise:
\[
w_t^{(s)}
=
\frac{\hat{w}_t^{(s)}}{\sum_{j=1}^S \hat{w}_t^{(j)}},
\]
defining the truncated posterior
\[
\nu_t(\cZ_t)
=
\sum_{s=1}^S
w_t^{(s)}\,
\delta(\cZ_t - \cZ_t^{(s)}).
\]
All non-retained paths are assigned zero mass.

This procedure corresponds to beam search (beam pruning) applied to sequential probabilistic models
\citep{graves2012sequence}.

\paragraph{Remark.}
Together, the recursive belief updates (Section~\ref{sec:belief-update})
and the sequential path-posterior recursion (Section~\ref{sec:path-posterior-update})
define a fully recursive streaming HMM.
The resulting truncated posterior yields both one-step and multi-step predictive distributions, as illustrated in Section~\ref{sec:experiments}.
Further details on sequential update strategies are provided in \citet{duran-martin2025bone}.

\section{Beam search as a predictive KL projection}

Here, we show that the beam search used in the streaming hidden Markov model (Section \ref{sec:beam-search})
follows directly from a constrained predictive optimisation problem.
In particular,
we show that
under a budget of $S$ possible paths,
the
forward-KL-optimal one-step-ahead approximation is equivalent to retaining the
$S$ posterior paths with the largest weight
and renormalising them.

\begin{theorem}
\label{theorem:hmm}
Fix $t$. Let
\[
w_t(\cZ_t)\triangleq p(\cZ_t\mid \cY_t)
\]
denote the posterior path weights.

For each path $\cZ_t$, define the \emph{path-conditional predictive}
\[
f_{\cZ_t}(y)
\triangleq
\sum_{k=1}^K
p(z_{t+1}=k \mid \cZ_t)\;
f_{\vb_{t,k}}(y),
\]
where $f_{\vb_{t,k}}(y)=p(y_{t+1}=y\mid \cY_k(\cZ_t))$ is the regime-$k$
predictive density determined by the path-specific summaries $\vb_{t,k}$.

Then the one-step-ahead posterior predictive is
\[
p(y)\equiv p(y_{t+1}=y\mid \cY_t)
=
\sum_{\cZ_t} w_t(\cZ_t)\,f_{\cZ_t}(y).
\]
Let $A \subseteq \cK^t$ be any subset of $S$ paths and define
\[
\delta(A)\triangleq \sum_{\cZ_t\notin A} w_t(\cZ_t),
\qquad
W_A\triangleq \sum_{\cZ_t\in A} w_t(\cZ_t)=1-\delta(A).
\]

Consider predictive approximations supported on $A$,
\[
q(y)=\sum_{\cZ_t\in A}\alpha(\cZ_t)\,f_{\cZ_t}(y),
\]
with $\alpha(\cZ_t)\ge 0$ and $\sum_{\cZ_t\in A}\alpha(\cZ_t)=1$.

Assume that the $\chi^2$-divergence between the discarded and retained predictive mixtures is finite:
\begin{equation}
\label{eq:assumption-chi2}
\tag{A1}
\chi^2(p_{A^c}\,\|\,p_A)
=
\int \frac{(p_{A^c}(y))^2}{p_A(y)}\,dy - 1
\le C < \infty,
\end{equation}
where
\[
\begin{aligned}
p_A(y)
&=
\sum_{\cZ_t\in A}\frac{w_t(\cZ_t)}{W_A}\,f_{\cZ_t}(y),\\
p_{A^c}(y)
&=
\sum_{\cZ_t\notin A}\frac{w_t(\cZ_t)}{\delta(A)}\,f_{\cZ_t}(y).
\end{aligned}
\]

Define the renormalised truncated predictive
\begin{equation}
\label{eq:truncated-predictive}
    q_A(y)
    =
    \sum_{\cZ_t\in A}\frac{w_t(\cZ_t)}{W_A}\,f_{\cZ_t}(y).
\end{equation}

Then:
\begin{enumerate}
\item[(i)] Among all $q$ supported on $A$,
\[
q_A \in \arg\min_q \mathrm{KL}(p\|q).
\]

\item[(ii)] Under \eqref{eq:assumption-chi2},
\begin{equation}\label{eq:bound-predictive}
\mathrm{KL}(p\|q_A)
\le
\log\!\big(1+\delta(A)C\big).
\end{equation}

\item[(iii)] The bound is increasing in $\delta(A)$; hence the optimal support of size $S$ consists of the $S$ largest posterior weights $w_t(\cZ_t)$.
\end{enumerate}
\end{theorem}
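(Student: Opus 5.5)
The plan is to treat the three parts separately: (i) as a concave maximisation over the simplex, (ii) through the decomposition of $p$ into retained and discarded mixtures followed by Jensen's inequality, and (iii) as a direct consequence of (ii).

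\textbf{Part (i).} Write $q_\alpha(y)=\sum_{\cZ_t\in A}\alpha(\cZ_t)f_{\cZ_t}(y)$ and note that
\[
\mathrm{KL}(p\|q_\alpha)=\int p\log p\,dy-\int p\log q_\alpha\,dy .
\]
Minimising over $\alpha$ in the simplex on $A$ is therefore the same as maximising $L(\alpha)=\int p\log q_\alpha$. The map $\alpha\mapsto q_\alpha$ is linear and $\log$ is concave, so $L$ is concave on the simplex. The KKT conditions are then sufficient: $\alpha^\star$ is optimal iff
\[
\partial_{\alpha(\cZ_t)}L=\int \frac{p\,f_{\cZ_t}}{q_{\alpha^\star}}\,dy
\]
equals a common constant $\lambda$ on the support of $\alpha^\star$, and is $\le\lambda$ off it. Summing the stationarity condition against $\alpha^\star$ gives $\lambda=\int p=1$.

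I then evaluate this gradient at $\alpha=w_t/W_A$, i.e.\ at $q=q_A=p_A$, using the split $p=W_A p_A+\delta(A)\,p_{A^c}$:
\[
\int \frac{p\,f_{\cZ_t}}{p_A}=W_A+\delta(A)\int \frac{p_{A^c}\,f_{\cZ_t}}{p_A}.
\]
Stationarity of $q_A$ therefore holds exactly when $\int p_{A^c}f_{\cZ_t}/p_A$ does not depend on $\cZ_t\in A$ (then, by the $\lambda=1$ identity, it equals $1$). This is the step I expect to be the main obstacle. The condition holds trivially when $\delta(A)=0$, and it holds whenever the discarded mixture is "uninformative" relative to $A$, for instance $p_{A^c}=p_A$.

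In general it can fail, because the mass $\delta(A)$ pulls the optimal $\alpha$ towards components resembling $p_{A^c}$. Since (i) as stated claims optimality outright for every $A$, I expect it cannot be proved unconditionally. My first attempt would be to make the stationarity condition explicit as an additional hypothesis, or to restrict (i) to the case $\delta(A)=0$. I would also bound the sub-optimality of $q_A$ by $O(\delta(A))$ using the displayed gradient together with \eqref{eq:assumption-chi2}, via Cauchy--Schwarz:
\[
\int \frac{p_{A^c}f_{\cZ_t}}{p_A}\le \big(1+\chi^2(p_{A^c}\|p_A)\big)^{1/2}\Big(\int \frac{f_{\cZ_t}^2}{p_A}\Big)^{1/2}.
\]
This keeps $q_A$ near-optimal, with the gap vanishing as $\delta(A)\to0$.

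\textbf{Part (ii).} Here $q_A=p_A$ and the argument is clean. Writing $\delta=\delta(A)$, Jensen's inequality (concavity of $\log$ under $p$) gives
\[
\mathrm{KL}(p\|p_A)=\int p\log\frac{p}{p_A}\le \log\int\frac{p^2}{p_A}.
\]
Expanding $p=W_Ap_A+\delta p_{A^c}$ and using $\int p_{A^c}=1$ together with $\int p_{A^c}^2/p_A=1+\chi^2(p_{A^c}\|p_A)$, I get
\[
\int\frac{p^2}{p_A}=W_A^2+2W_A\delta+\delta^2\big(1+\chi^2\big)=1+\delta^2\chi^2 .
\]
Since $\delta\le 1$ and $\chi^2\le C$ by \eqref{eq:assumption-chi2}, this is at most $1+\delta C$, which yields \eqref{eq:bound-predictive}. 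In fact it yields the sharper bound $\log(1+\delta^2C)$.

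\textbf{Part (iii).} The function $\delta\mapsto\log(1+\delta C)$ is increasing for $C\ge0$. Taking $C$ as a bound valid uniformly over the candidate supports of size $S$, minimising the bound amounts to minimising $\delta(A)=1-W_A$, i.e.\ maximising $W_A$. Maximising a sum of $S$ nonnegative weights over subsets of size $S$ is achieved by selecting the $S$ largest $w_t(\cZ_t)$, by a simple exchange argument. This identifies the beam-search support as the minimiser of the bound.
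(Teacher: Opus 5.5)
Your analysis is sound, and on (i) it is more accurate than the paper. The paper proves (i) by writing $\mathrm{KL}(p\|q)$ as a $q$-free term plus $W_A\,\mathrm{KL}(p_A\|q)+\delta(A)\int p_{A^c}\log(p_A/q)\,dy$. It then concludes from the first summand alone that $q=p_A$ is optimal. But the cross term also depends on $q$: it decreases when weight is moved toward components resembling $p_{A^c}$, which is exactly what your KKT computation detects. So (i) is false as stated, even under (A1). A concrete instance: let $A=\{a,a'\}$ with $f_a\neq f_{a'}$ and $w_a,w_{a'}>0$, and let every discarded path have predictive $f_a$. Then $p=(w_a+\delta(A))f_a+w_{a'}f_{a'}$ is itself supported on $A$, so the minimum of the KL is $0$. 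On the other hand $p-q_A=(w_{a'}\delta(A)/W_A)(f_a-f_{a'})\neq0$ for $\delta(A)>0$, so $\mathrm{KL}(p\|q_A)>0$, and (A1) holds since $\chi^2(f_a\|q_A)\le W_A/w_a-1<\infty$. This is not exotic. With known emission densities, $f_{\cZ_t}$ depends only on $z_t$. On any support containing a path ending in each state, putting mass $p(z_t=k\mid\cY_t)$ on such a path gives $\mathrm{KL}=0$, while the renormalised path weights generally do not. Weakening (i) to your stationarity condition ($\int p_{A^c}f_{\cZ_t}/p_A\,dy=1$ for all $\cZ_t\in A$), or to $\delta(A)=0$, is the right repair.

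Parts (ii) and (iii) follow the paper's route: Jensen, the split $p=W_Ap_A+\delta(A)p_{A^c}$, then the $\chi^2$ term. Your sharper value $1+\delta(A)^2\chi^2$ is what the paper's own algebra produces, since $W_A+\delta(A)[W_A+\delta(A)(1+C)]=1+\delta(A)^2C$; its final simplification silently weakens $\delta(A)^2$ to $\delta(A)$. Your caveat that $C$ must be uniform over candidate supports is genuinely needed in (iii). The paper leaves it implicit, even though $\chi^2(p_{A^c}\|p_A)$ depends on $A$. Finally, for a quantitative substitute for (i) you do not need the Cauchy--Schwarz gradient estimate, whose constant involves $\int f_{\cZ_t}^2/p_A\,dy\le W_A/w_t(\cZ_t)$. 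Since $\mathrm{KL}\ge0$, the excess of $q_A$ over the best $A$-supported mixture is at most $\mathrm{KL}(p\|q_A)\le\log(1+\delta(A)^2C)$. This is $O(\delta(A)^2)$ rather than $O(\delta(A))$.
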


See Appendix \ref{sec:proof} for the proof.

\paragraph{Remarks.}
The discarded mass $\delta(A)$ quantifies the posterior weight removed by truncation; when $\delta(A)$ is small, the truncated predictive remains close to the full mixture, as controlled by Bound~\eqref{eq:bound-predictive}.
Thus,
theorem \ref{theorem:hmm} identifies beam search as the forward-KL optimal projection onto $S$-supported predictive mixtures.


\section{Experiments}
\label{sec:experiments}

\subsection{GP-HMM}
\label{sec:experiment-gphmm}

We consider a two-regime HMM with regime-specific Gaussian Process (GP) emissions. 
Data are generated from a process with regime-dependent linear drift and oscillatory structure:
\[
y_t = y_{t-1}
+ \text{slope}^{(k)} \Delta t
+ w_1^{(k)} \sin\!\big(w_2^{(k)} t_{\text{local}}\big)\Delta t
+ \varepsilon_t,
\]
where $k\in\{0,1\}$ and $\varepsilon_t\sim\mathcal{N}(0,\sigma^2)$. 
Slopes are $\pm 0.15$ and both regimes share $(w_1,w_2)=(0.3,0.5)$.

Each regime uses a GP predictive with a Gaussian plus periodic kernel \citep{williams2006gaussian,duvenaud2014gp}. 

Figure~\ref{fig:gp-dgp} shows one-step-ahead predictions. 
Uncertainty increases near regime switches and contracts within stable segments.

\begin{figure}[htb]
\centering
\includegraphics[width=0.6\columnwidth]{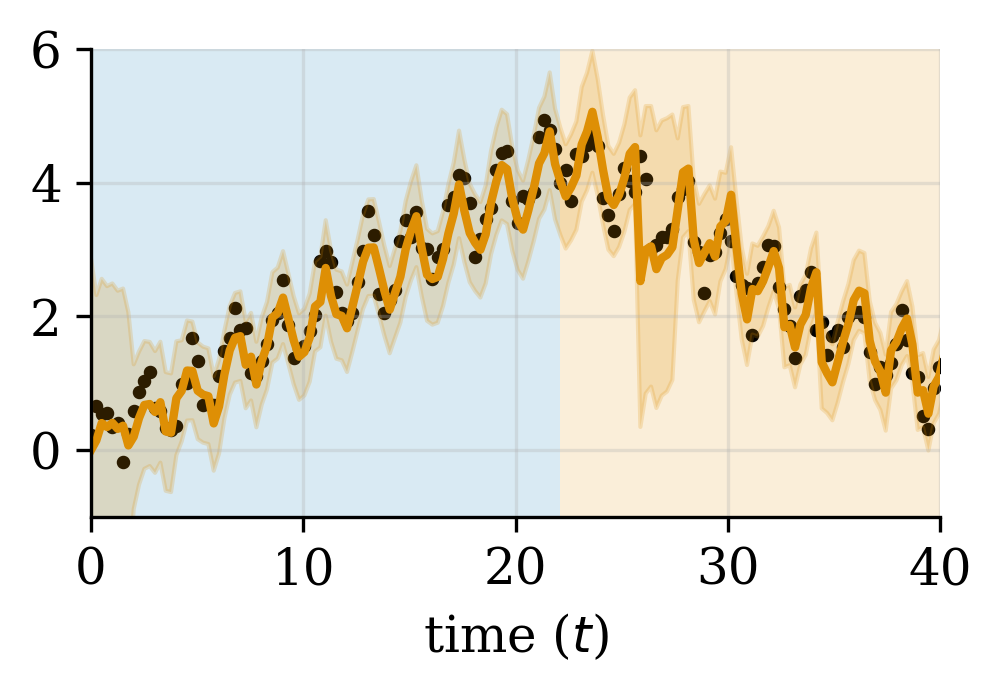}
\vspace{-1em}
\caption{GP-HMM one-step-ahead predictions.}
\label{fig:gp-dgp}
\end{figure}

Figure~\ref{fig:gp-multi-step-forecast} shows multi-step forecasts that are
close to and far from a regime change.

\begin{figure}[htb]
\centering
\includegraphics[width=\columnwidth]{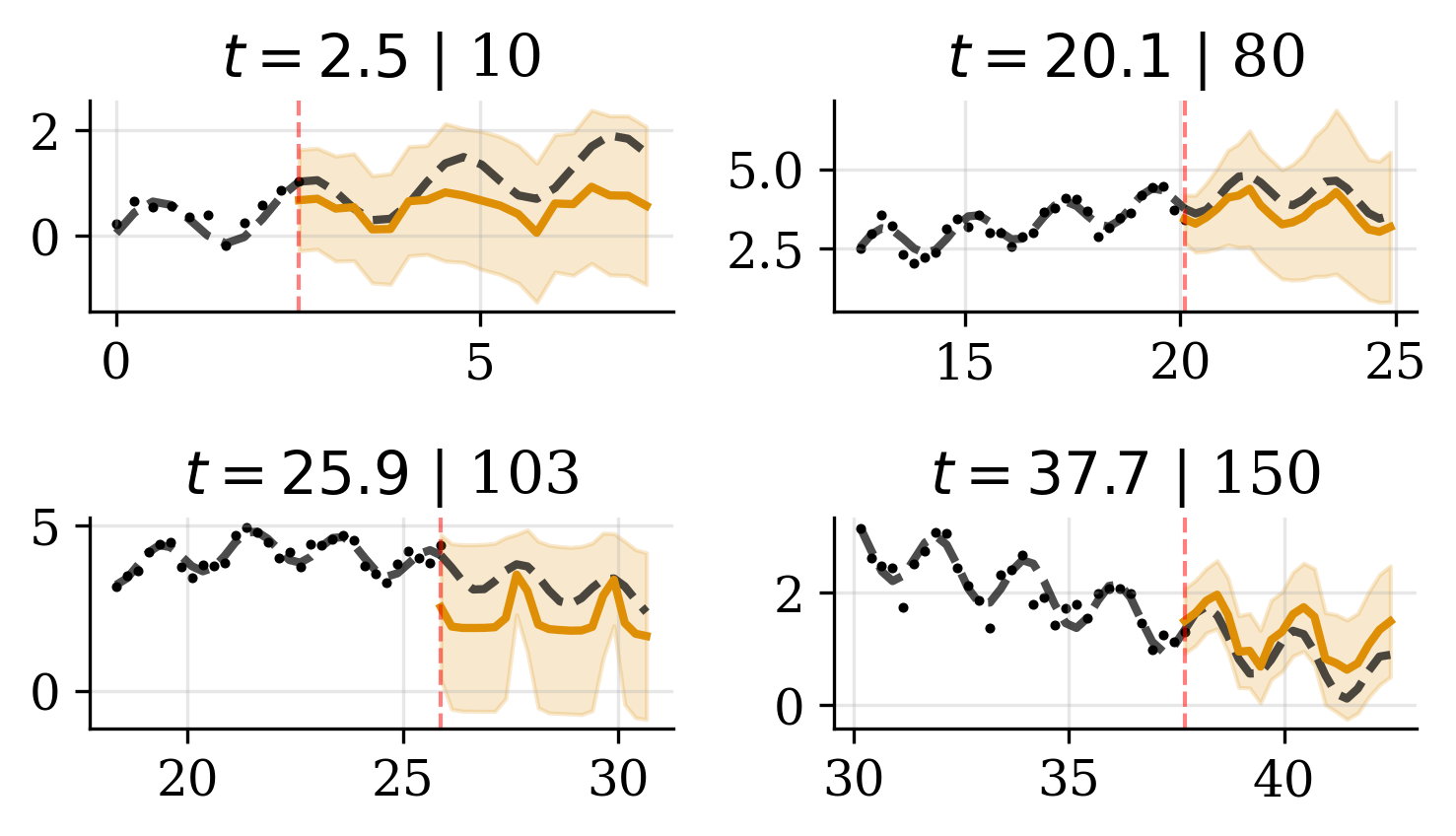}
\vspace{-2em}
\caption{Representative multi-step forecasts.}
\label{fig:gp-multi-step-forecast}
\end{figure}

\subsection{1D Gaussian HMM}

We next consider a three-regime Gaussian HMM with known variance and unknown regime means. 
We compare SHMM against online EM \citep{cappe2011onlinehmm} and a Rao–Blackwellised particle filter (RBPF) \citep{murphy2001rbpf}
with prior as proposal.
For matched computational budgets, SHMM and RBPF use $S=2$ hypotheses.

Figure~\ref{fig:mean-level-estimate} shows the regime-mean estimates. With $S=2$, RBPF fails to consistently track regime switches. Online EM recovers mean levels but exhibits higher variance. SHMM remains stable and tracks regime changes accurately under the same hypothesis budget.

\begin{figure}[htb]
\centering
\includegraphics[width=0.8\columnwidth]{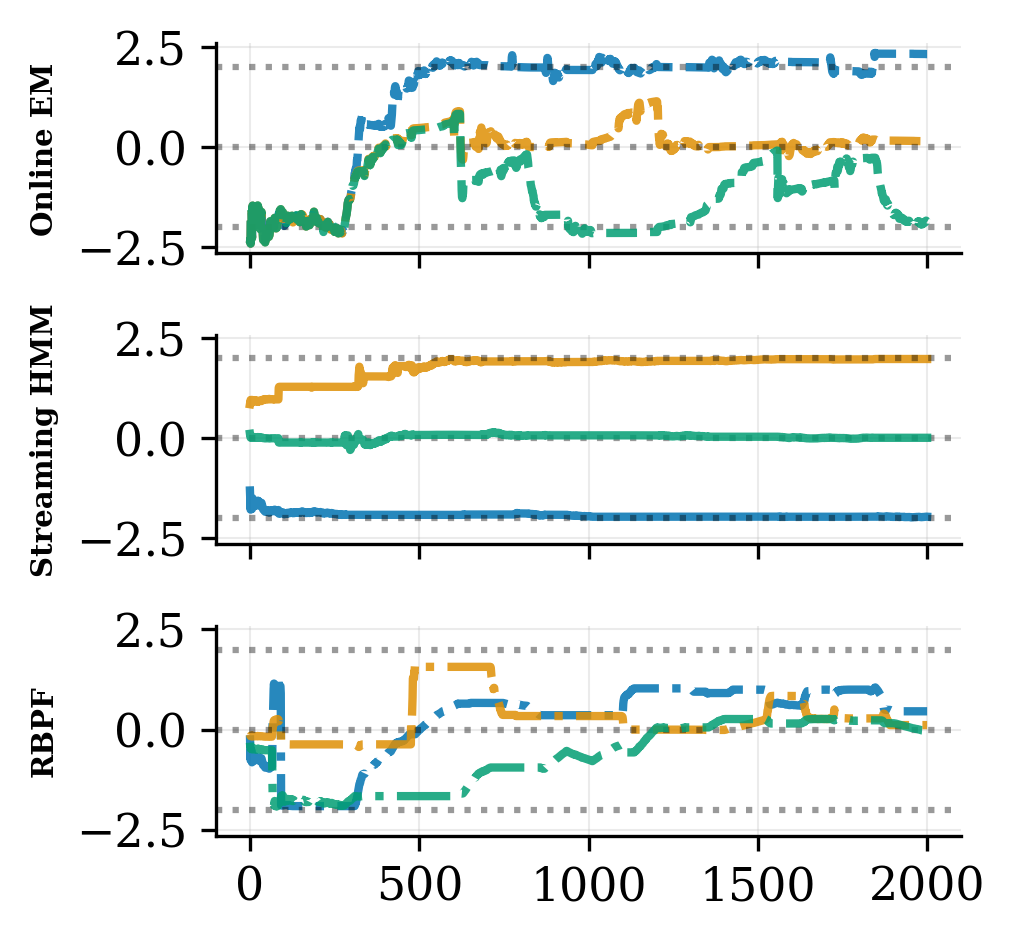}
\vspace{-1em}
\caption{Estimated regime means.}
\label{fig:mean-level-estimate}
\end{figure}

Table~\ref{tab:results} reports predictive accuracy and runtime (mean $\pm$ std). 
SHMM achieves the lowest MAE and RMSE while remaining computationally competitive.
Further results with varying $S$ are shown in Appendix \ref{sec:1d-gauss-further}.

\begin{table}[htb]
\centering
\scriptsize
\begin{tabular}{lccc}
\textbf{Method} & \textbf{MAE} & \textbf{RMSE} & \textbf{Time (s)} \\
\hline
Online EM & $0.9 \pm 0.01$ & $1.2 \pm 0.02$ & $\mathbf{0.04 \pm 0.01}$ \\
SHMM      & $\mathbf{0.8 \pm 0.02}$ & $\mathbf{1.1 \pm 0.02}$ & $0.10 \pm 0.03$ \\
RBPF      & $1.0 \pm 0.04$ & $1.3 \pm 0.06$ & $0.26 \pm 0.01$ \\
\end{tabular}
\caption{Predictive performance and runtime.}
\label{tab:results}
\end{table}

\bibliographystyle{plainnat}   
\bibliography{refs}

\appendix
\onecolumn

\aistatstitle{A Predictive View on Streaming Hidden Markov Models: \\
Supplementary Materials}

\section{Proofs}
\label{sec:proof}

\begin{proof}
Fix a set $A$. Define
\[
p_A(y)\triangleq \sum_{\cZ_t\in A}\frac{w_t(\cZ_t)}{W_A}\,f_{\cZ_t}(y),
\qquad
p_{A^c}(y)\triangleq \sum_{\cZ_t\notin A}\frac{w_t(\cZ_t)}{\delta(A)}\,f_{\cZ_t}(y),
\]
so that the posterior predicive is
\begin{equation}
p(y)=W_A\,p_A(y)+\delta(A)\,p_{A^c}(y).
\label{eq:mixture-decomposition}
\end{equation}
By definition, $p_A=q_A$ in \eqref{eq:truncated-predictive}.

\medskip
\noindent\textbf{(i) Optimal weights for fixed support.}

Let $q$ be any predictive supported on $A$, i.e.
$q(y)=\sum_{\cZ_t\in A}\alpha(\cZ_t)f_{\cZ_t}(y)$ with
$\alpha(\cZ_t)\ge 0$ and $\sum_A \alpha(\cZ_t)=1$.
Then
\[
\mathrm{KL}(p\|q)
=
\int p(y)\log\frac{p(y)}{q(y)}\,dy.
\]
Insert $p_A$:
\[
\log\frac{p(y)}{q(y)}
=
\log\frac{p(y)}{p_A(y)}
+
\log\frac{p_A(y)}{q(y)},
\]
hence
\begin{equation}
\mathrm{KL}(p\|q)
=
\int p(y)\log\frac{p(y)}{p_A(y)}\,dy
+
\int p(y)\log\frac{p_A(y)}{q(y)}\,dy.
\label{eq:kl-split}
\end{equation}
The first term does not depend on $q$.
Using the decomposition \eqref{eq:mixture-decomposition} in the second term,
\begin{equation}
\int p(y)\log\frac{p_A(y)}{q(y)}\,dy
=
W_A\int p_A(y)\log\frac{p_A(y)}{q(y)}\,dy
+
\delta(A)\int p_{A^c}(y)\log\frac{p_A(y)}{q(y)}\,dy.
\label{eq:weighted-decomposition}
\end{equation}
The first term equals $W_A\,\mathrm{KL}(p_A\|q)\ge 0$, with equality if and only if $q=p_A$.
Therefore $q=p_A=q_A$ minimises $\mathrm{KL}(p\|q)$ among all $q$ supported on $A$.

\medskip
\noindent\textbf{(ii) Predictive bound.}

Since $q_A=p_A$, using \eqref{eq:mixture-decomposition},
\[
\frac{p(y)}{p_A(y)}
=
W_A+\delta(A)\frac{p_{A^c}(y)}{p_A(y)},
\]
so
\begin{equation}
\mathrm{KL}(p\|p_A)
=
\int p(y)\log\!\left(
W_A+\delta(A)\frac{p_{A^c}(y)}{p_A(y)}
\right)\,dy.
\label{eq:kl-pa}
\end{equation}

Assume that the $\chi^2$-divergence between $p_{A^c}$ and $p_A$ is finite, i.e.
\[
{\chi^2(p_{A^c}\,\|\,p_A)
=
\int \frac{(p_{A^c}(y))^2}{p_A(y)}\,dy - 1
\le C < \infty.}
\]

{By Jensen's inequality applied to \eqref{eq:kl-pa},}
\[
\mathrm{KL}(p\|p_A)
\le
\log\!\left(
\int p(y)\Big[
W_A+\delta(A)\frac{p_{A^c}(y)}{p_A(y)}
\Big]dy
\right).
\]

{Since $\int p(y)\,dy=1$, this becomes}
\[
\mathrm{KL}(p\|p_A)
\le
\log\!\left(
W_A
+
\delta(A)
\int p(y)\frac{p_{A^c}(y)}{p_A(y)}dy
\right).
\]

{Using the mixture decomposition \eqref{eq:mixture-decomposition},}
\[
\int p(y)\frac{p_{A^c}(y)}{p_A(y)}dy
=
W_A
\int p_A(y)\frac{p_{A^c}(y)}{p_A(y)}dy
+
\delta(A)
\int p_{A^c}(y)\frac{p_{A^c}(y)}{p_A(y)}dy.
\]

{The first term equals $W_A$.
The second term can be bounded using the $\chi^2$ condition:}
\[
\int p_{A^c}(y)\frac{p_{A^c}(y)}{p_A(y)}dy
=
\int \frac{(p_{A^c}(y))^2}{p_A(y)}dy
\le 1 + C.
\]

{Therefore}
\[
\int p(y)\frac{p_{A^c}(y)}{p_A(y)}dy
\le
W_A + \delta(A)(1+C).
\]

{Substituting into the Jensen bound yields}
\[
\mathrm{KL}(p\|p_A)
\le
\log\!\left(
W_A
+
\delta(A)\big[W_A + \delta(A)(1+C)\big]
\right).
\]

{Using $W_A=1-\delta(A)$ and simplifying gives}
\[
\mathrm{KL}(p\|q_A)
\le
\log\!\big(1 + \delta(A)C\big).
\]

\medskip
\noindent\textbf{(iii) Choice of support.}

The upper bound depends on $A$ only through $\delta(A)$.
Since $C\ge 0$, the function
$\log(1+\delta(A) C)$ is increasing in $\delta\in[0,1)$.
Therefore minimising the bound is equivalent to minimising $\delta(A)$,
i.e.\ maximising
\[
W_A=\sum_{\cZ_t\in A} w_t(\cZ_t).
\]
This is achieved by selecting the $S$ largest posterior weights $w_t(\cZ_t)$.
\end{proof}

\section{Parametric updates}
\label{app:parametric-updates}

As a convenient special case, consider a regime-parameterised emission family with parameters
$\theta_k$, collected as $\Theta_K=(\theta_1,\dots,\theta_K)$.
Conditional on $(\cZ_t,\Theta_K)$, observations are independent with regime-specific emissions
\[
p(y_t \mid z_t,\Theta_K)
=
p(y_t \mid \theta_{z_t}),
\qquad
p(\cY_t \mid \cZ_t,\Theta_K)
=
\prod_{\tau=1}^t p(y_\tau \mid \theta_{z_\tau}).
\]
We assume independent priors across regimes,
\[
p(\Theta_K)=\prod_{k=1}^K p(\theta_k).
\]

\paragraph{Posterior factorisation.}
Conditioned on a path $\cZ_t$, the parameter posterior factorises across regimes:
\begin{equation}
\label{eq:param-post-factorised}
p(\Theta_K \mid \cZ_t, \cY_t)
=
\prod_{k=1}^K p(\theta_k \mid \cY_k(\cZ_t)),
\end{equation}
where
\[
\cY_k(\cZ_t)
=
(y_\tau : 1\le \tau \le t,\ z_\tau = k)
\]
denotes the observations assigned to regime $k$ under $\cZ_t$.
Equivalently,
\[
p(\Theta_K \mid \cZ_t, \cY_t)
=
\prod_{k=1}^K
\left(
p(\theta_k)
\prod_{\tau=1}^t
p(y_\tau \mid \theta_k)^{1\{z_\tau=k\}}
\right).
\]
Thus each retained path maintains regime-specific posteriors that are updated only when the corresponding regime is active.

\paragraph{Sequential update.}
If $\cZ_t=\cZ_{t-1}\cup\{k\}$, only the $k$-th posterior changes:
\[
p(\theta_k \mid \cY_k(\cZ_t))
\propto
p(\theta_k \mid \cY_k(\cZ_{t-1}))\,p(y_t \mid \theta_k),
\qquad
p(\theta_j \mid \cY_j(\cZ_t))
=
p(\theta_j \mid \cY_j(\cZ_{t-1}))
\ \text{for } j\neq k.
\]
Under conjugate priors this recursion is closed form and reduces to a constant-time sufficient-statistics update.

\paragraph{Path-conditional predictive.}
Given the regime-$k$ posterior under $\cZ_t$, the one-step-ahead predictive is
\[
p(y_{t+1} \mid z_{t+1}=k,\cY_t,\cZ_t)
=
\int
p(y_{t+1}\mid \theta_k)\,
p(\theta_k \mid \cY_k(\cZ_t))
\,\mathrm{d}\theta_k,
\]
which is available in closed form for exponential-family models with conjugate priors and for linear-Gaussian emissions.

\section{Further experimental results}
\subsection{1D Gaussian HMM}
\label{sec:1d-gauss-further}
Figure~\ref{fig:prediction-comparison} compares one-step-ahead predictions across methods for $S=2$. 
SHMM closely tracks regime changes and stabilises rapidly after transitions. 
RBPF exhibits delayed adaptation and occasionally interpolates between regimes when particle diversity collapses. 
Online EM produces smoother trajectories but with increased variance near switching points.

\begin{figure}[htb]
\centering
\includegraphics[width=0.6\columnwidth]{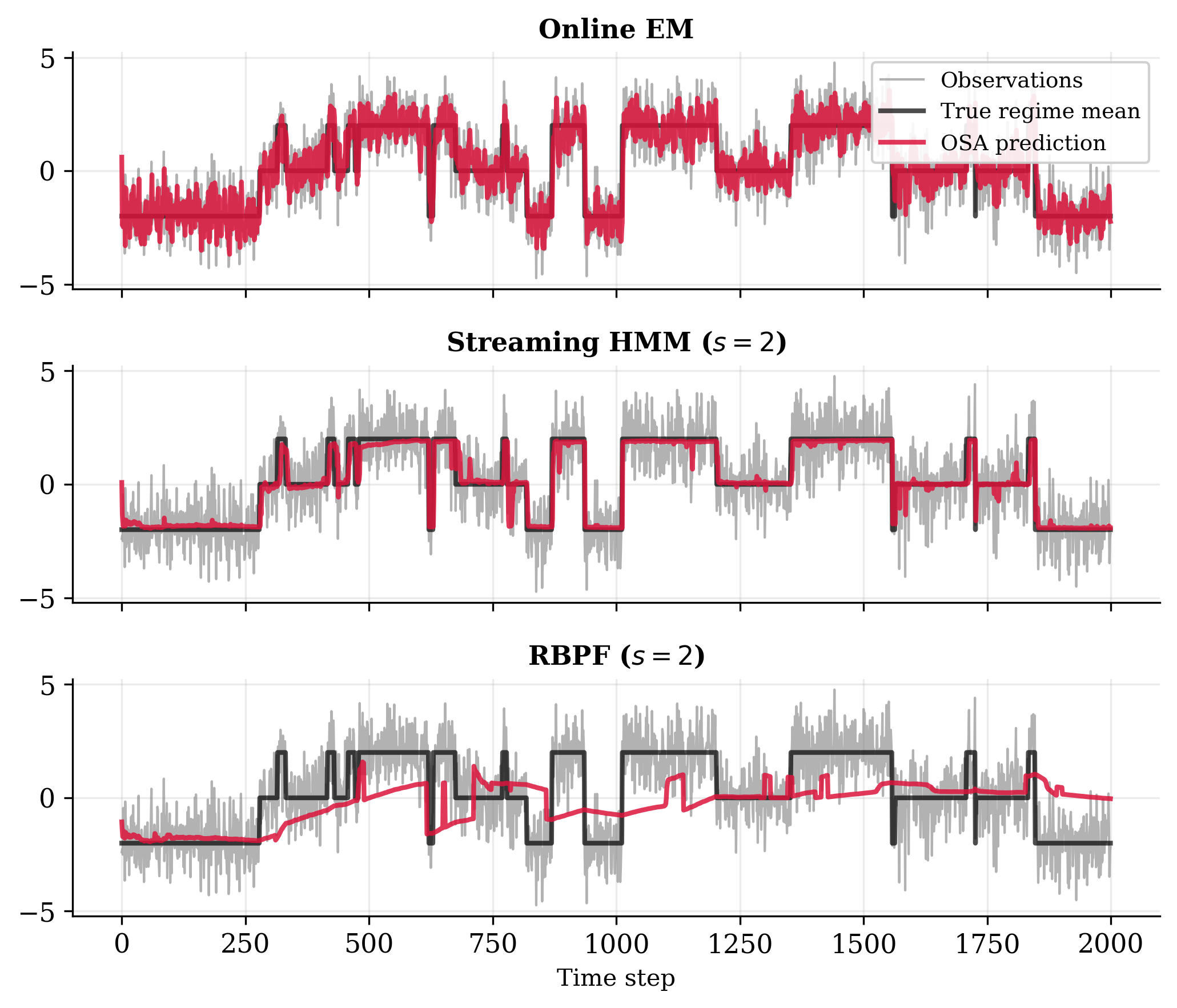}
\caption{One-step-ahead predictions for the two-regime Gaussian HMM with $S=2$. Grey denotes observations, black the true regime mean, and red the predictive mean. SHMM adapts rapidly to regime changes, while RBPF and Online EM exhibit delayed or smoother transitions.}
\label{fig:prediction-comparison}
\end{figure}

Figure~\ref{fig:varying-particles} examines predictive accuracy and runtime as the particle count $S$ varies. 
SHMM attains low MAE and RMSE with small $S$ and exhibits only marginal gains beyond $S=5$, indicating that most posterior mass is captured by a small hypothesis set. 
In contrast, RBPF requires larger $S$ to approach comparable accuracy, reflecting the variance induced by stochastic resampling. 
Runtime increases approximately linearly with $S$ for both methods, though SHMM remains consistently faster across the evaluated range.
\begin{figure}[htb]
\centering
\includegraphics[width=\columnwidth]{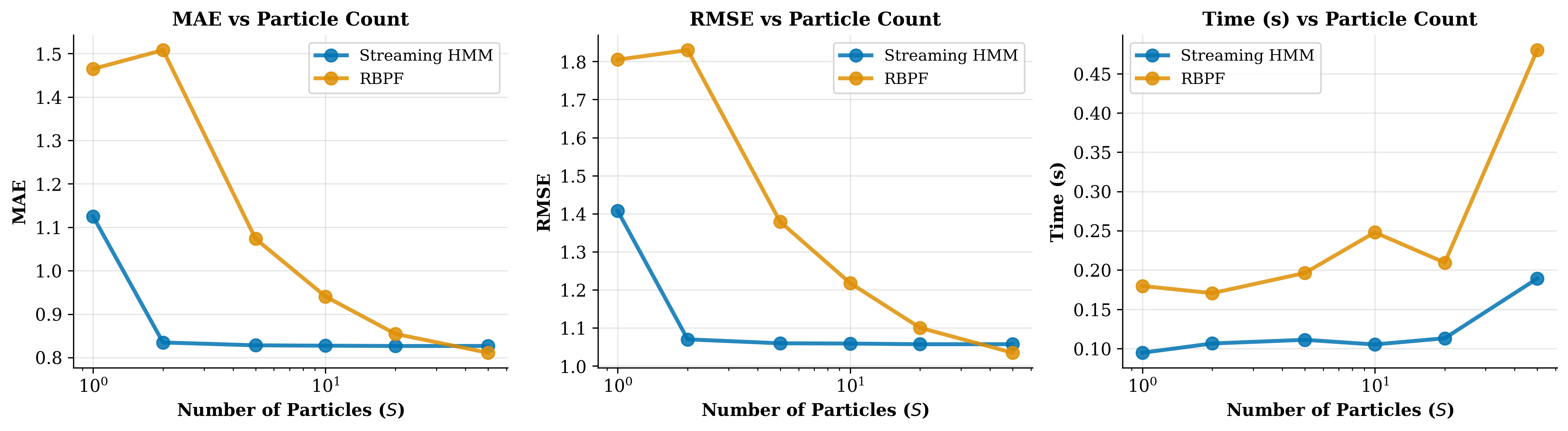}
\caption{Predictive accuracy (MAE, RMSE) and runtime as a function of particle count $S$ for SHMM and RBPF. SHMM attains low error with small $S$, while RBPF requires larger $S$ for comparable accuracy. Runtime increases approximately linearly with $S$ for both methods.}
\label{fig:varying-particles}
\end{figure}

\vfill

\end{document}

%% file: macros.tex
\newcommand{\vb}{\myvec{b}}

\def\vb{{\bm{b}}}

\newcommand{\cK}{\mathcal{K}}
\newcommand{\cY}{\mathcal{Y}}
\newcommand{\cZ}{\mathcal{Z}}